\newtheorem{theorem}{Theorem}
\title{On the Benefits of Biophysical Synapses}
\author{
    %Authors
    % All authors must be in the same font size and format.
    Julian Lemmel, %\textsuperscript{\rm 1}, 
    %Ramin Hasani\textsuperscript{\rm 1, 2}, 
    %Mathias Lechner\textsuperscript{\rm 3},
    Radu Grosu%\textsuperscript{\rm 1}
}
\begin{document}

\maketitle

\begin{abstract}
The approximation capability of ANNs and their RNN instantiations, is strongly correlated with the number of parameters packed into these networks. However, the complexity barrier for human understanding, is arguably related to the number of neurons and synapses in the networks, and to the associated nonlinear transformations. In this paper we show that the use of biophysical synapses, as found in LTCs, have two main benefits. First, they allow to pack more parameters for a given number of neurons and synapses. Second, they allow to formulate the nonlinear-network transformation, as a linear system with state-dependent coefficients. Both increase interpretability, as for a given task, they allow to learn a system linear in its input features, that is smaller in size compared to the state of the art. We substantiate the above claims on various time-series prediction tasks, but we believe that our results are applicable to any feedforward or recurrent ANN.

%The approximation capability of ANNs is strongly correlated to the number of parameters packed into these networks. However, large trained models are often incomprehensible to humans. This is arguably due to the number of neurons and synapses in the networks, and to the associated nonlinear transformations. In this paper we show that the use of biophysical synapses, as found in models of non-spiking neurons, have two main benefits. First, they allow to pack more parameters for the same number of neurons and synapses. Second, they allow to formulate the nonlinear-network transformation, as a linear system with state-dependent coefficients. Both increase interpretability, as for a given task, they allow to learn a system linear in its input features, that is much smaller in size compared to the state of the art. We substantiate the above claims on various time-series prediction tasks comparing LTCs with CT-RNNs. We argue that our results generalise to any feedforward or recurrent ANN.

\end{abstract}

\section{Introduction}
Inspired by spiking neurons, artificial neurons (ANs) combine in one unit, the additive behavior of biological neurons with the graded nonlinear behavior of their synapses~\cite{bishop1995,gbc2016}. This makes ANs implausible from a biophysical point of view, and precluded their adoption in neural science. 

Artificial neural networks (ANNs) however, correct this biological blunder.
%~\cite{grosuArxiv20}. 
In ANNs it is irrelevant what is the meaning of a neuron, and what is that of a synapse. What matters, is the mathematical expression of the network itself. This was best exemplified by ResNets, which were forced, for technical reasons, to separate the additive transformation from the graded one, and introduce new state variables, which are the outputs of the additive neural, rather than the nonlinear synaptic units ~\cite{heZR016}.

This separation allows us to reconcile ResNets with liquid time-constant neural networks (LTCs), a biophysical model for nonspiking neurons, that shares architectural motifs, such as activation, inhibition, sequentialization, mutual exclusion, and synchronization, with gene regulatory networks~\cite{icra2019,ncp2020,ltc2021,alon2007}. LTCs capture the behavior of neurons in the retina of large species~\cite{Kandel}, and that of the neurons in small species, such as the C.elegans nematode~\cite{wicks1996dynamic}. In LTCs, a neuron is a capacitor, and its rate of change is the sum of a leaking current, and of synaptic currents. The conductance of a synapse varies in a graded nonlinear fashion with the potential of the presynaptic neuron, and this is multiplied with a difference of potential of the postsynaptic neuron, to produce the synaptic current. Hence, the graded nonlinear transformation is the one that synapses perform, which is indeed the case in nature, and not the one performed by neurons.

In contrast to ResNets, NeuralODEs and CT-RNNs~\cite{chen2018neural,funahashi1993approximation}, LTCs multiply (or gate) the conductance with a difference of potential. This is dictated by physics, as one needs to obtain a current. Gating makes each neuron interpretable as a linear system with state-dependent coefficients~\cite{melis18,sdre2008}. Moreover, LTCs associate each activation function to a synapse (like in nature) and not to a neuron (like in ANNs). This allows LTCs to pack considerably more parameters in a network with a given number of neurons and  synapses. As the approximation capability of ANNs and LTCs is strongly correlated with their number of learnable parameters, LTCs are able to approximate the same behavior with a much smaller network, that is explainable in terms of its architectural motifs. We argue that nonlinearity and the size of a neural network are the major complexity barriers for human understanding.

Moving the activation functions to synapses can be accomplished in any ANN, with the same benefits as for LTCs in network-size reduction. The gating of sigmoidal activation functions can be replaced with hyperbolic-tangent activation functions. However, one looses the biophysical interpretation of a neural network, the linear interpretation of its neurons, and the polarity of its synapses.

We compared the expressive power of LTCs with that of CT-RNNs, (Augmented) NeuralODEs, LSTMs, and CT-GRUs, for various recurrent tasks. In this comparison, we considered LTCs and CT-RNNs with both neural and synaptic activation functions. We also investigated the benefits of gating sigmoidal activation with a difference of potential. Our results show that synaptic activation considerably reduces the number of neurons and associated synapses required to solve a task, not only in LTCs but also in CT-RNNs. We  also show that the use of hyperbolic-tangent activation functions in CT-RNNs has similar expressive power as gating sigmoids with a difference of potential, but it looses the linear interpretation.

The rest of the paper is structured as follows. First, we provide a fresh look into ANNs, ResNets, NeuralODEs, CT-RNNs, and LTCs. This paves the way to then show the benefits of biophysical synapses in various recurrent tasks. Finally we discuss our results and touch on future work.

\section{A Fresh Look at Neural Networks} 
\label{sec:dnn}

\subsection{Artificial Neural Networks}
\label{sec:anns}

An AN receives one or more inputs, sums them up in a linear fashion, and passes the result through a nonlinear activation function, whose bias $b$ is the condition for the neuron to fire (spike). However, activation is graded (non-spiking), with smooth (e.g.~sigmoidal) shape. Formally:
\begin{equation}
y_i^{t+1}\,{=}\, \sigma(\sum_{j=1}^n w_{ji}^t\,y_j^{t}\,{+}\,b_i^{t+1})\quad  \sigma(x) \,{=}\,\frac{1}{1+e^{-x}} 
\end{equation}
where as in Figure~\ref{fig:resnet}, $y_i^{t+1}$ is the output of neuron $i$ at layer $t\,{+}\,1$, $y_j^{t}$ is the output of neuron $j$ at layer $t$, $w_{ji}^{t}$ is the weight associated to the synapse between neuron $j$ at layer $t$ and neuron $i$ at layer $t\,{+}\,1$, $b_i^{t+1}$ is the bias (threshold) of neuron $i$ at layer $t\,{+}\,1$, and $\sigma$ is the activation function, e.g.,~the logistic function above. A network with one input layer, one output layer, and $N\,{\geq}\,2$  hidden layers, is called a deep neural network (DNN)~\cite{gbc2016}. ANNs are universal approximators.

Although ANs are biophysically implausible, ANNs are in fact closely related to nonspiking neural networks. To demonstrate this, let us look first at ResNets~\cite{heZR016}.

\subsection{Residual Neural Networks}
\label{ssec:resnets}

DNNs with a large number of hidden layers suffer from the degradation problem, which persists even if the vanishing gradients are curated. Intuitively, DNNs cannot accurately learn identities. Hence, they were simply added to the DNNs in form of skip connections~\cite{heZR016}. 

The resulting architecture, as shown in Figure~\ref{fig:resnet}, was called a residual neural network (ResNet)\footnote{In~\cite{heZR016}, $x_i^t$ skips the first sum and it is added directly to $x_i^{t+2}$. Hence, the architecture shown in Figure~\ref{fig:resnet}, can be regarded as ResNets with finest skip granularity.}. In ResNets, the outputs $x_i^t$ of the sums are distinguished from the outputs $y_i^t$ of the sigmoids. Formally:
\begin{equation}
x_i^{t+1} = x_i^t + \sum_{j=1}^n w_{ji}^t\,y_j^{t} \qquad y_j^{t} = \sigma(x_j^{t}\,{+}\,b_j^t)
\label{eq:resnet1}
\end{equation}

\begin{figure}[t]
    \centering
    \includegraphics[width=0.4\textwidth]{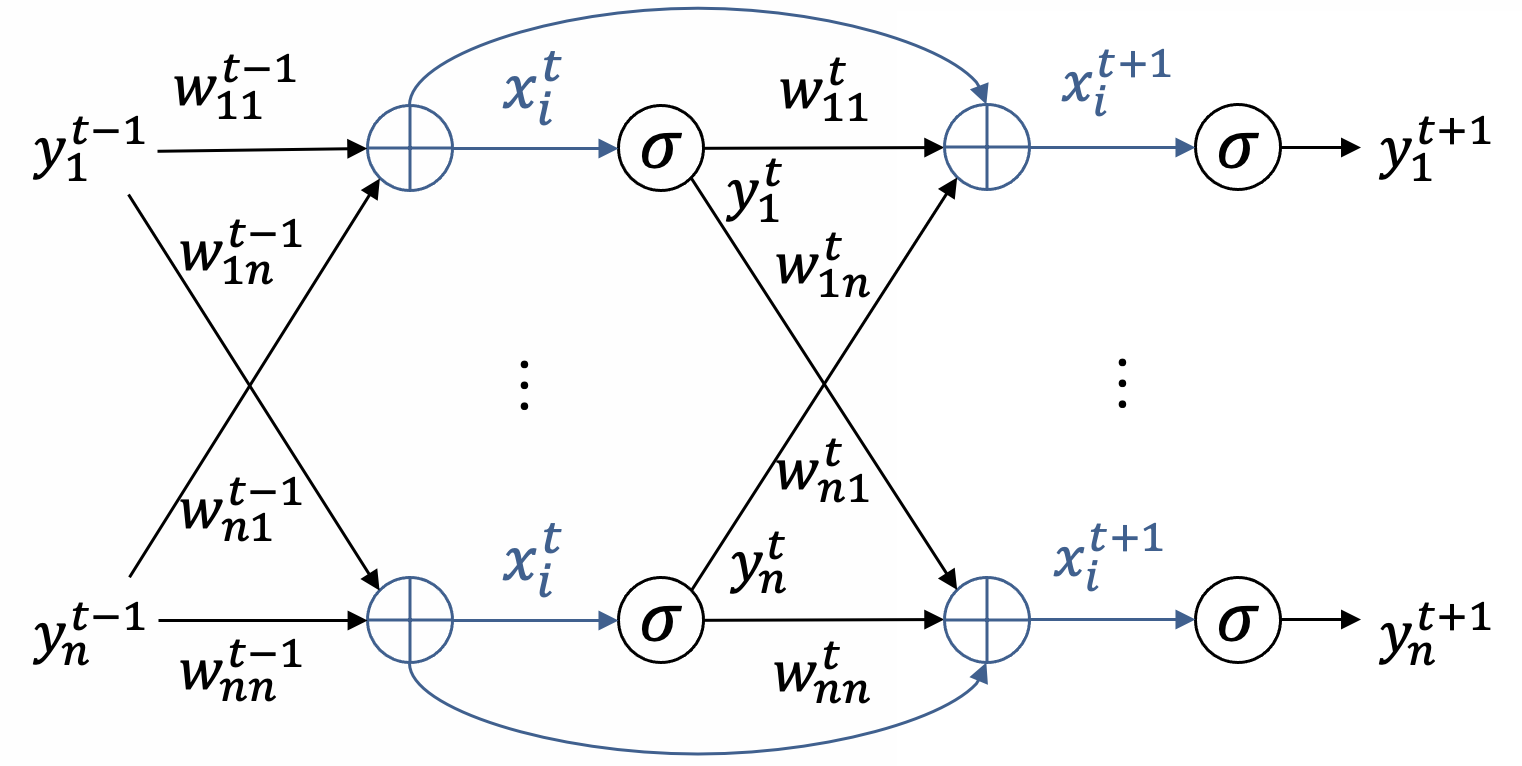}
    %\vspace*{-2ex}
    \caption{DNN (in black) and ResNet (in black and blue).}
    \label{fig:resnet}
    \vspace*{-3ex}
\end{figure}
\noindent{}This distinction is very important from a biophysical point of view. The main idea is that neurons are just summation units, and the sigmoidal transformation happens in synapses. In fact, one can put the weights in the synaptic transformation, too, which leads to the equivalent equations:
\begin{equation}
x_i^{t+1} = x_i^t + \sum_{j=1}^n y_{ji}^{t} \qquad y_{ji}^{t} = w_{ji}^t\,\sigma(x_j^{t}\,{+}\,b_j^t) 
\label{eq:resnet2}
\end{equation}
Here $w_{ji}^t$ can be thought of as the maximum conductance of the input dependent synaptic transformation $\sigma(x_j^{t}\,{+}\,b_j^t)$. This transformation is indeed graded in nature, that is nonspiking. Since ResNets are particular DNNs, with identity as a linear activation, they are also universal approximators.

\subsection{Neural Ordinary Differential Equations}
\label{ssec:neuralODEs}

Equations~(\ref{eq:resnet2}) is the Euler discretization of a set of differential equations, where the time step is simply taken to be one~\cite{e17,chen2018neural}. Mathematically:
\begin{equation}
\dot{x}_i(t) = \sum_{j=1}^n y_{ji}(t) \quad y_{ji}(t) = w_{ji}(t)\,\sigma(x_j(t)\,{+}\,b_j(t)) 
\label{eq:contRS}
\end{equation}
In these equations, $x$, $y$, and the parameters $w$ and $b$ change continuously in time. Now suppose we make the parameters constant. Are we still going to have a universal ODE approximator?  The answer is yes, as we will show in next section. The differential equations are as follows:
\begin{equation}
\dot{x}_i(t) = \sum_{j=1}^n\,y_{ji}(t) \quad y_{ji}(t) = w_{ji}\,\sigma(x_j(t)\,{+}\,b_j) 
\label{eq:neuralODEs}
\end{equation}
This is the form of Neural Ordinary Differential Equations (NeuralODEs)~\cite{e17,chen2018neural}.\footnote{Strictly speaking, NeuralODEs $\dot{x}(t)\,{=}\,f(x)$ may have an arbitrary number of neural layers for the function $f$.} Taking the state of the network as the sigmoid $y$ of a sum is equivalent to taking the state as the sum $x$ of sigmoids. 

\begin{theorem}[NeuralODEs]
Let $x$ and $y$ be state vectors. Then $\dot{y}\,{=}\,\sigma(Wy+b)$ is equivalent to $\dot{x}\,{=}\,W\sigma(x+b)$.
\end{theorem}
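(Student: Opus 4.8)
The plan is to prove the equivalence by an explicit change of variables, and the first order of business is to identify the right one. The tempting choice, suggested by the ``sigmoid of a sum versus sum of sigmoids'' narrative, is the nonlinear substitution $y=\sigma(x+b)$; but differentiating this introduces a factor $\sigma'(x+b)$ through the chain rule, leaving $\dot y=\sigma'(x+b)\odot Wy$ rather than $\sigma(Wy+b)$. I would therefore discard it and use instead the \emph{linear} map $x=Wy$, which is the substitution that makes everything line up.

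For the forward direction I would start from a trajectory $y(t)$ of $\dot y=\sigma(Wy+b)$ and set $x=Wy$. Since $W$ and $b$ are constant, differentiating gives $\dot x=W\dot y=W\sigma(Wy+b)$, and substituting $Wy=x$ into the argument of $\sigma$ yields $\dot x=W\sigma(x+b)$. Thus $x=Wy$ carries every solution of the $y$-system to a solution of the $x$-system; crucially, the bias ends up in the right place precisely because $x$ equals $Wy$ and not $Wy+b$.

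For the converse, given a trajectory $x(t)$ of $\dot x=W\sigma(x+b)$, I would define $y$ by $\dot y=\sigma(x+b)$ together with an initial value satisfying $Wy(0)=x(0)$. Then $Wy(t)=Wy(0)+\int_0^t W\sigma(x+b)\,ds=x(0)+\int_0^t\dot x\,ds=x(t)$, so $Wy=x$ holds along the entire trajectory, and hence $\dot y=\sigma(x+b)=\sigma(Wy+b)$, i.e.\ $y$ solves the $y$-system. This closes the loop and establishes the claimed equivalence, with $\sigma$ acting componentwise so that the substitution and integration steps are legitimate coordinate by coordinate.

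The main obstacle is making the converse airtight: choosing $y(0)$ with $Wy(0)=x(0)$ requires $x(0)$ to lie in the range of $W$, and for a genuinely bijective correspondence one wants $W$ invertible (then $y=W^{-1}x$ works verbatim, and one checks $\dot y=W^{-1}\dot x=\sigma(x+b)=\sigma(Wy+b)$). I would handle this either by stating the equivalence for invertible $W$, or by observing that $\dot x\in\mathrm{range}(W)$ always holds, so the $x$-system never leaves $x(0)+\mathrm{range}(W)$; restricting to initial states in $\mathrm{range}(W)$ (exactly the states reachable from the network's inputs) removes the difficulty entirely. Everything else is routine.
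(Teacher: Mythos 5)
Your substitution $x=Wy$ and the forward computation $\dot{x}=W\dot{y}=W\sigma(Wy+b)=W\sigma(x+b)$ is exactly the paper's argument --- in fact, that single line is the paper's \emph{entire} proof. Where you go beyond it is the converse direction, which the paper silently omits: you reconstruct $y$ from a given trajectory $x$ by integrating $\dot{y}=\sigma(x+b)$ from an initial value satisfying $Wy(0)=x(0)$, verify $Wy\equiv x$ along the whole trajectory, and correctly flag the hidden hypothesis that $x(0)$ must lie in $\mathrm{range}(W)$ (or that $W$ be invertible, if one wants a genuine bijection between the two solution sets, $y=W^{-1}x$). Your observation that $\dot{x}\in\mathrm{range}(W)$ always holds, so the affine subspace $x(0)+\mathrm{range}(W)$ is invariant and the restriction is harmless for states reachable from the network's inputs, is the right way to close that gap. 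The trade-off between the two treatments: the paper's one-liner shows only that $y$-solutions push forward to $x$-solutions, leaving the word ``equivalent'' informal; your version pins down in what sense, and under what condition on $W$, the two ODEs really are equivalent, at the cost of a few extra lines. Your opening remark is also sound: the tempting nonlinear substitution $y=\sigma(x+b)$ does fail by the chain rule, and rejecting it in favor of the linear map is precisely the point of the theorem.
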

%\vspace*{-3ex}
\begin{proof}
Take $x\,{=}\,Wy$. Then the following holds:
\[
\dot{x} = W\dot{y} = W\sigma(Wy+b) = W\sigma(x+b)
\vspace*{-4.4ex}
\]

\end{proof}

A slight extension called ANODEs is given 
in~\cite{dupont2019augmented}, which embeds the input in the internal state, and projects the state to the outputs $S$, as follows:
\begin{equation}
x(t_0) = [x,0]^T \quad y = \pi_S(x(t_N))
\label{eq:aNode}
\end{equation}
 NeuralODEs are harder to learn than ResNets. For the training purpose, one can use the adjoint equation, and employ efficient numerical solvers~~\cite{e17,chen2018neural}.

\subsection{Continuous-Time Recurrent Neural Networks}
\label{ssec:ctrnns}

\subsubsection{Autonomous case.}
In this form of CT-RNNs, the input is the initial state. Let us call them ACT-RNNs, They extend NeuralODEs with a leading term $-w_{i}x_i(t)$~\cite{funahashi1993approximation}. Their mathematical form is as follows:
\begin{equation}
\begin{array}{c}
\dot{x}_i(t) = -w_{i}x_i(t) + \sum_{j=1}^n\,y_{ji}(t)\\[1mm] 
y_{ji}(t) = w_{ji}\,\sigma(x_j(t)\,{+}\,b_j)
\end{array}
\label{eq:na-act-rnn}
\end{equation}
The leading term brings the system back to the equilibrium state, when no input is available. Hence, a small perturbation is forgotten, that is, the system is stable. Like in NeuralODEs, one can interchange sumation and activation.

\begin{theorem}[ACT-RNNs]
Let $x$ and $y$ be state vectors. Then $\dot{y}\,{=}\,{-}w\,{*}\,y\,{+}\,\sigma(Wy+b)$ and $\dot{x}\,{=}\,{-}w\,{*}\,x\,{+}\,W\sigma(x+b)$ are equivalent ODEs where $*$ is the pointwise product of vectors.
\end{theorem}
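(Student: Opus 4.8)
The plan is to reuse the change of variables that proved the NeuralODE theorem, namely to set $x = Wy$, differentiate, and check whether the additional leak term survives the substitution intact. First I would take the $y$-system $\dot{y} = -w*y + \sigma(Wy+b)$ as given and substitute $x = Wy$. Differentiating gives $\dot{x} = W\dot{y}$, and plugging in the equation for $\dot{y}$ yields
\[
\dot{x} = W\bigl(-w*y + \sigma(Wy+b)\bigr) = -W(w*y) + W\sigma(x+b),
\]
where I have used $Wy = x$ inside the argument of $\sigma$ exactly as in the previous proof. The second summand is already in the target form $W\sigma(x+b)$, so the entire content of the argument collapses onto handling the leak term $-W(w*y)$.

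The decisive step is therefore to show that $W(w*y) = w*(Wy) = w*x$, so that the leak becomes $-w*x$ and the transformed system reads $\dot{x} = -w*x + W\sigma(x+b)$, matching the claim. Written componentwise, $W(w*y)$ has $i$-th entry $\sum_{j} W_{ij}\,w_j\,y_j$, whereas $w*(Wy)$ has $i$-th entry $w_i \sum_{j} W_{ij}\,y_j$; these coincide precisely when the leak rate is uniform, i.e.\ when $w$ is a common scalar (or, more weakly, when $w_i = w_j$ whenever $W_{ij}\neq 0$). Under that hypothesis the pointwise product commutes with left multiplication by $W$, and the proof closes in a single line, exactly paralleling the NeuralODE case.

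I expect this commutation of the leak with $W$ to be the \textbf{main obstacle}. Unlike the NeuralODE theorem, where no leak is present and the substitution $x = Wy$ goes through unconditionally, here the diagonal scaling $w*(\cdot)$ does not in general commute with a full matrix $W$. I would make explicit that the natural reading of the statement is the single-time-constant setting, where $w$ is a shared leak, so that $w*z = wz$ is ordinary scalar multiplication and $W(wy) = w(Wy)$ holds trivially. If instead one insists on a genuinely vector-valued $w$, the theorem should be understood as carrying the hypothesis that $W$ commutes with $\mathrm{diag}(w)$; I would flag this as exactly the condition that renders the two ODEs equivalent under the linear change of variables $x = Wy$.
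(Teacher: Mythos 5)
Your substitution $x = Wy$ is exactly the paper's proof, and the obstacle you flag is genuine: the paper's one-line chain $\dot{x} = W\dot{y} = W({-}w*y+\sigma(Wy+b)) = {-}w*x + W\sigma(x+b)$ silently performs the commutation $W(w*y) = w*(Wy)$ in its last equality, with no justification. As your componentwise computation shows, that identity requires $W_{ij}(w_i - w_j) = 0$ for all $i,j$, so for a general vector leak $w$ and a full matrix $W$ the two ODEs are \emph{not} related by this change of variables; the statement is safe only under your added hypothesis that $w$ is a uniform scalar leak (or that $\mathrm{diag}(w)$ commutes with $W$, i.e.\ leak rates agree across connected neurons). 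So your proof is the same argument as the paper's, but strictly more careful: it isolates the precise condition under which the leak term survives the transformation, a condition the paper's proof (and its appeal to the Funahashi construction) takes for granted without stating.
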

%\vspace*{-1ex}
\begin{proof}
Let $x\,{=}\,Wy$. Then~\cite{funahashi1993approximation}:
\[
\begin{array}{l@{\,{=}\,}l}
\dot{x} & W\dot{y}\\ & W({-}w\,{*}\,y\,{+}\,\sigma(Wy{+}b))\,{=}\,{-}w\,{*}\,x\,{+}\,W\sigma(x\,{+}\,b)
\end{array}
\vspace*{-4ex}
\]
\end{proof}

\noindent{}ACT-RNNs are universal approximators, and stabilization is not relevant in this respect~\cite{funahashi1993approximation}. Hence, NeuralODEs are universal approximators, too.

\subsubsection{\em Synaptic activation.}
Like in ANNs, ACT-RNNs associate each activation function to a neuron. We therefore call them NA-ACT-RNNs, where NA stands for neural activation.

\begin{figure}[t]
    \centering
    \includegraphics[width=0.4\textwidth]{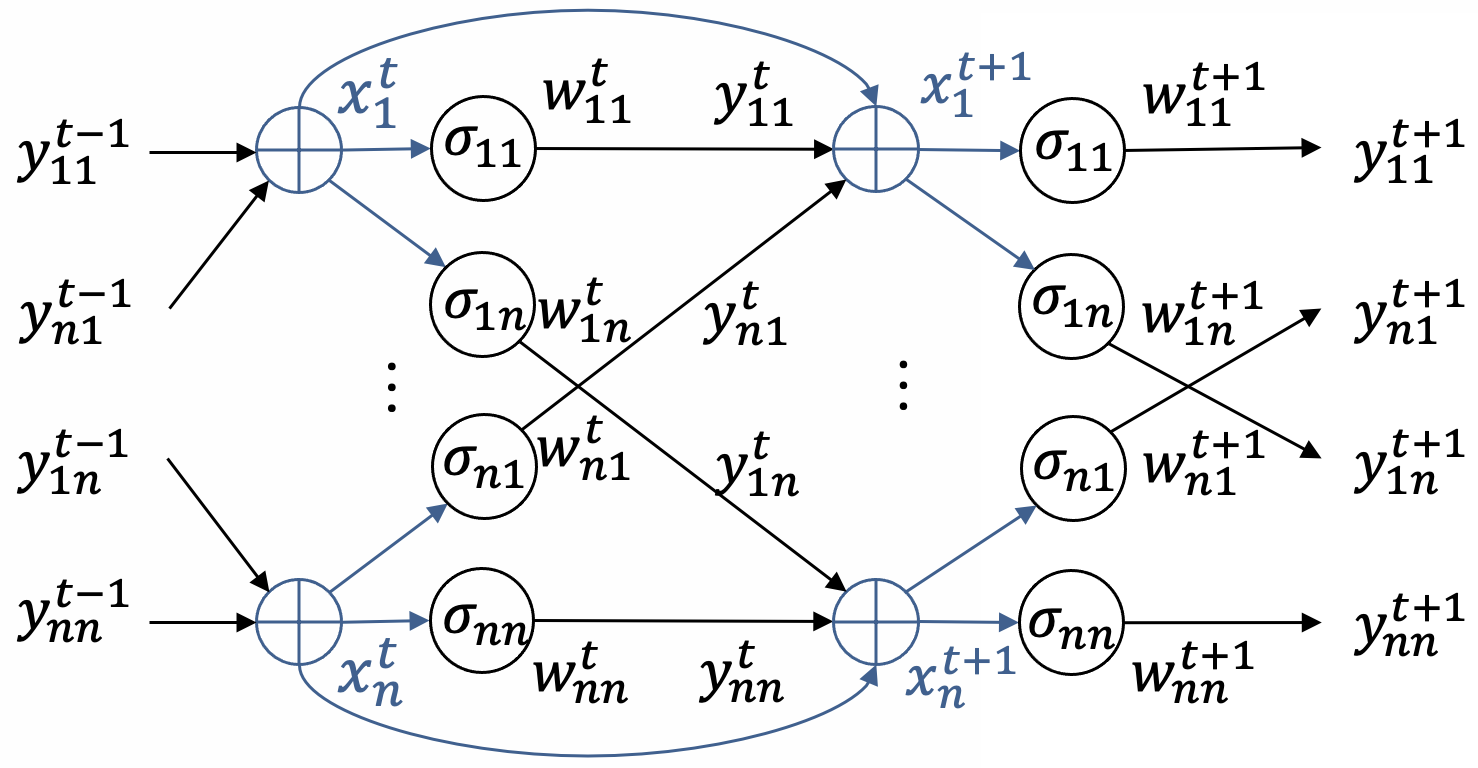}
    %\vspace*{-2ex}
    \caption{Synaptic-activation DNN and ResNet.}
    \label{fig:synapticResNet}
    \vspace*{-3ex}
\end{figure}

However, as shown in Figure~\ref{fig:synapticResNet}, any ANN can be rewritten, such that activation functions are associated to synapses. We call this form of ACT-RNNs, SA-ACT-RNN, where SA stands for synaptic activation. Adding to each activation a variance $a$, too, one has the following explicit form:
\begin{equation}
\begin{array}{c}
\dot{x}_i(t) = -w_{i}x_i(t) + \sum_{j=1}^n\,y_{ji}(t)\\[1mm] 
y_{ji}(t) = w_{ji}\,\sigma(a_{ji}x_j(t)\,{+}\,b_{ji})
\end{array}
\label{eq:sa-act-rnn}
\end{equation}
The advantage of SA-ACT-RNNs is that they pack many more parameters in a network, for the same number of neurons and synapses.  For example, an SA-ACT-RNN with 32 neurons, connected in an all to all fashion, is able to pack 3104 parameters. This roughly corresponds to an NA-ACT-RNN with 54 neurons which packs 3132 parameters.

While the approximation capability of a neural network is strongly correlated with its number of parameters, we strongly believe that the complexity barrier for human understanding, is in the number of neurons and synapses.

\subsubsection{General case.}
CT-RNNs have in general associated a time varying input signal $u$, too, that is, they are RNNs. The way the input is considered, plays a very important role.

A popular way of adding the input signal $u$, is to extend the sum within a sigmoid with a sum corresponding to the input. In vectorial form this looks as follows:
\begin{equation}
\label{eq:na-ctrnn1}
\dot{y}\,{=}\,{-}w\,{*}\,y\,{+}\,\sigma(Wy+Vu+b)
\end{equation}
This form has excellent convergence properties, but it cannot be extended to synaptic activations. We therefore prefer the following form, which has the same convergence properties:
\begin{equation}
\label{eq:na-ctrnn2}
\dot{x}\,{=}\,{-}w\,{*}\,x\,{+}\,W\sigma(a^x\,{*}\,x+b^x){+}\,V\sigma(a^u\,{*}\,u+b^u)
\end{equation}
where $a^x,b^x$ and $a^u,b^u$ represent the variance and the bias vectors for the state and the input vectors, respectively.  Finally, another popular way of adding the input to CT-RNNs is in a linear fashion, as below. 
%As we show later however, this form has worse convergence/learning properties:
%
\begin{equation}
\label{eq:na-ctrnn3}
\dot{x}\,{=}\,{-}w\,{*}\,x\,{+}\,W\sigma(a\,{*}\,x+b){+}\,Vu
\end{equation}
\begin{figure}[t]
    \centering
    %\vspace*{-1ex}
   \includegraphics[width=0.4\textwidth]{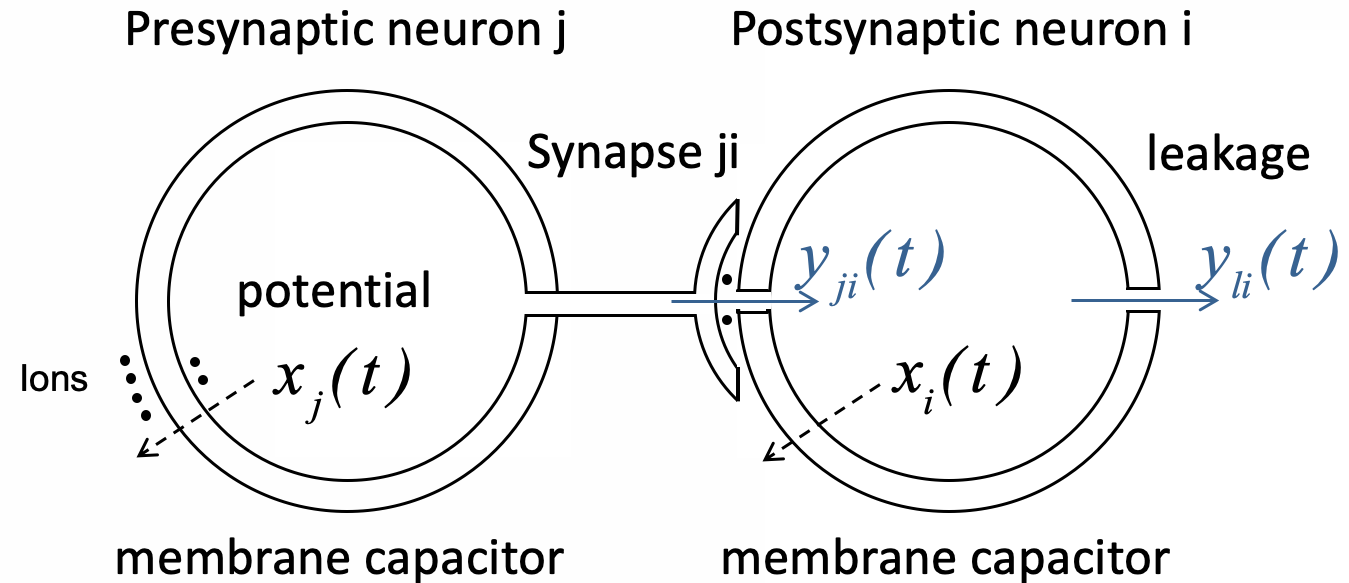}
    %\vspace*{-1ex}
   \caption{The electric representation of a nonspiking neuron.}
    \label{fig:NRN}
    \vspace*{-3ex}
\end{figure}

\begin{figure*}[t]
    \centering
     \includegraphics[width=.6\textwidth]{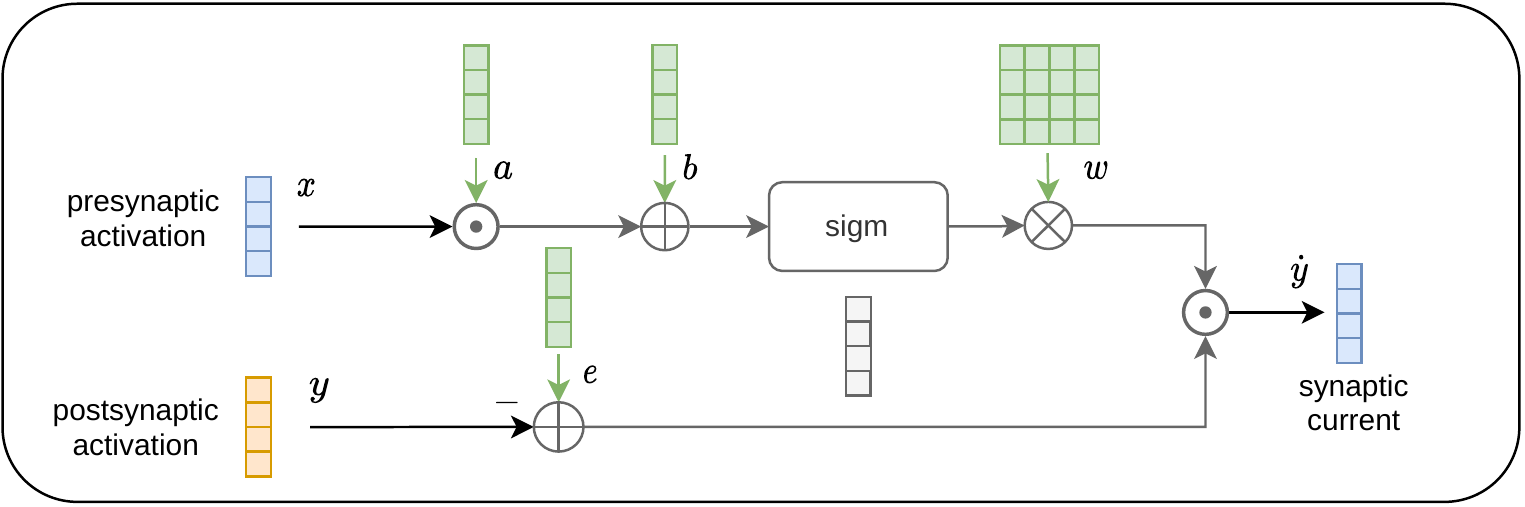}
  \includegraphics[width=.6\textwidth]{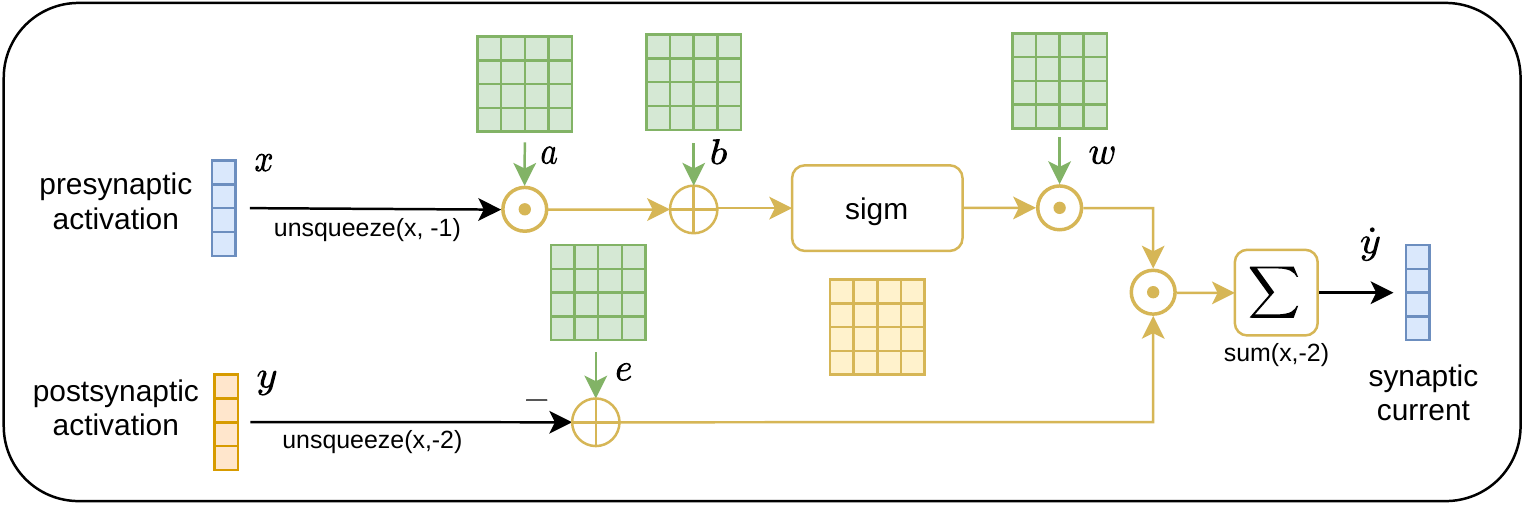}
   
  \caption{Synaptic Layer in LTCs with synaptic (top) and neural (bottom) activation. }
    \label{fig:synaptic}
    \vspace*{-4ex}
\end{figure*}

%\begin{figure*}[t]
%    \centering
%  \includegraphics[width=.6\textwidth]{images/ct-rnn.pdf}
%  \caption{LTCs with synaptic (top) and neural activation (bottom). }
%    \label{fig:neural}
%\vspace*{-3ex}
%\end{figure*}

\subsubsection{\em Synaptic activation.}
Like in SA-ACT-RNNs, the last two CT-RNNs can be rewritten, by associating each activation to a synapse. To distinguish the two variants, we call them NA-CT-RNNs and SA-CT-RNNs, respectively. In scalar form, the sigmoidal-input version can be written as below. The linear-input version is very similar:
\begin{equation}
\begin{array}{c}
\dot{x}_i(t) = -w_{i}x_i(t) + \sum_{j=1}^n\,y_{ji}(t) + \sum_{j=1}^m\,z_{ji}(t)\\[1mm] 
y_{ji}(t) = w_{ji}\,\sigma(a^x_{ji}x_j(t)+b^x_{ji})\\[1mm]
z_{ji}(t) = v_{ji}\,\sigma(a^u_{ji}u_j(t)+b^u_{ji})
\end{array}
\label{eq:ctrnn}
\end{equation}
Now consider an SA-CT-RNN with 32 neurons, connected all to all, and with 32 inputs. It packs a total of 6176 parameters. This roughly corresponds to an NA-CT-RNN with 54 neurons, which packs a total of 6102 parameters.

\subsection{Liquid Time Constant Networks}
\label{sec:NRNs}
\subsubsection{Autonomous case.}
LTCs are a biophysical model for the neural system of small species~\cite{wicks1996dynamic,icra2019,ncp2020,ltc2021}, and the retina of large species~\cite{Kandel}. Due to the small dimension of these neural systems (${\leq}\,1\,$mm), neural transmission happens passively, in the analog domain, without considerable attenuation. Hence, the neurons do not need to spike for an accurate signal transmission. 

As shown in Figure~\ref{fig:NRN}, the neuron's membrane is an insulator, with ions both on its inside and outside. Electrically, it is a capacitor. The difference between the inside-outside ionic concentrations defines the membrane potential (MP) $x$. The rate of change of $x$ depends on the currents $y$ passing through the membrane. These are either external currents (ignored for ALTCs), a leakage current, and synaptic currents. For simplicity, we consider only chemical synapses. The capacitor equation is then as follows:
\begin{equation}
\begin{array}{c}
C\,\dot{x}_i(t) = w_{li}\,(e_{li}-x_i(t)) + \sum_{j=1}^n y_{ji}(t) \\[1mm] 
y_{ji}(t) = w_{ji}\,\sigma(a_{ji}x_j(t)\,{+}\,b_{ji})\,(e_{ji} - x_i(t)) 
\end{array}
\label{eq:sa-altc}
\end{equation}
where $C$ is the membrane capacitance,  $e_{li}$ the resting potential, $w_{li}$ the leaking conductance, and $e_{ji}$ the synaptic potentials. These are either 0\,mV for excitatory synapses (potential $x_i$ is negative so the current is positive), or -90\,mV for inhibitory synapses (the current is in this case negative).  

%\subsection{Interpretability and Succinctness of NRNs}
%\label{ssec:succinctness}
%
Equations~(\ref{eq:sa-altc}) are very similar to Equations~(\ref{eq:na-act-rnn}) of an SA-ACT-RNN. They have a leaking current, which ensures the stability of the ALTC, and a presynaptic-neuron controlled conductance $\sigma$ for the synapses, with maximum conductance $w_{ji}$. This conductance is multiplied with a difference of potential $e_{ji}\,{-}\,x_i(t)$, to get a current. This biophysical constraint, makes them different from SA-ACT-RNNs. So what is the significance of this gating term from the point of view of machine learning? As we prove below, it has important consequences for the interpretability of ALTCs. 
%\vspace*{1mm}
\begin{theorem}[Interpretability]
Each ALTC neuron is interpretable as linear regression of its inputs.
\end{theorem}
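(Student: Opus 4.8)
The plan is to exploit the single feature that distinguishes Equations~(\ref{eq:sa-altc}) from the SA-ACT-RNN of Equations~(\ref{eq:na-act-rnn}): every synaptic current carries the gating factor $(e_{ji}{-}x_i(t))$, which is \emph{affine in the postsynaptic potential} $x_i$, and so is the leakage term $w_{li}(e_{li}{-}x_i)$. First I would substitute the synaptic currents $y_{ji}$ into the capacitor equation for neuron $i$ and distribute each conductance $w_{ji}\,\sigma(a_{ji}x_j{+}b_{ji})$ over its potential difference. This rewrites the right-hand side as a sum of terms that are individually either constant in $x_i$ or proportional to $x_i$.

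The decisive step is then to collect the two groups. Writing $g_{ji}(t)\,{:=}\,w_{ji}\,\sigma(a_{ji}x_j(t){+}b_{ji})$ for the instantaneous synaptic conductances, I would gather the coefficients of $x_i$ into a total conductance $A_i(t)\,{=}\,w_{li}{+}\sum_{j}g_{ji}(t)$ and the remaining terms into a driving input $B_i(t)\,{=}\,w_{li}e_{li}{+}\sum_j g_{ji}(t)\,e_{ji}$, obtaining the scalar affine ODE
\[
C\,\dot{x}_i(t)\,{=}\,B_i(t)-A_i(t)\,x_i(t)\,{=}\,A_i(t)\bigl(x_i^\ast(t)-x_i(t)\bigr),\qquad x_i^\ast(t)\,{=}\,\frac{B_i(t)}{A_i(t)}.
\]
Crucially, $A_i$ and $B_i$ depend on the \emph{presynaptic} potentials $x_j$ through $\sigma$ but not on $x_i$ itself, so the equation is genuinely first-order linear in the postsynaptic state, with state-dependent coefficients, relaxing at rate $A_i/C$ toward the target $x_i^\ast$. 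Since $\sigma\,{\in}\,(0,1)$ and the maximum conductances are nonnegative, $A_i(t)\,{>}\,0$ whenever the leak $w_{li}\,{>}\,0$, so the quotient is well defined and $x_i^\ast$ is a genuine convex combination $x_i^\ast\,{=}\,\tilde w_{li}e_{li}{+}\sum_j \tilde w_{ji}e_{ji}$ of the resting and synaptic reversal potentials, with normalized weights $\tilde w_{li}{+}\sum_j \tilde w_{ji}\,{=}\,1$.

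The regression reading then follows by evaluating at equilibrium: setting $\dot{x}_i{=}0$ gives $x_i{=}x_i^\ast$, a linear (indeed convex) combination of the reversal potentials whose coefficients are the normalized conductances. This is exactly the prediction of a weighted linear model, so each neuron ``reads out'' a regression whose weights are fixed by the presynaptic activations, in the spirit of the state-dependent-coefficient form of~\cite{melis18,sdre2008}. I expect the only genuine difficulty to be conceptual rather than computational, namely making precise the sense in which this is ``linear'': the map is affine in $x_i$ (and in the $e_{ji}$) only once the conductances $g_{ji}$ are frozen, so the claim concerns the pointwise/instantaneous linear structure, not a globally linear input--output map — all the retained nonlinearity is hidden in the dependence of $A_i,B_i$ on the $x_j$. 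I would also flag the positivity of $A_i$ as the one place requiring a hypothesis (nonnegative maximum conductances and a nonzero leak), since it is what licenses the normalization and the convex-combination interpretation.
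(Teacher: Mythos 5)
Your proposal is correct, and its engine is the same as the paper's: the gating term $(e_{ji}-x_i)$ makes the right-hand side of Equations~(\ref{eq:sa-altc}) affine in the postsynaptic potential once the sigmoidal conductances $g_{ji}=w_{ji}\,\sigma(a_{ji}x_j+b_{ji})$ are read as state-dependent weights. Where you differ is in how far you take it. The paper's proof is essentially an assertion: name $g_{ji}$ as the state-dependent weight from neuron $j$ to neuron $i$, observe that $\dot{x}_i(t)$ is then a linear regression in $x$, and add a robustness remark (small perturbations of $x$ give small changes in $\dot{x}$) that you omit. You instead carry out the collection explicitly, obtaining $C\dot{x}_i=B_i-A_ix_i=A_i(x_i^\ast-x_i)$, and you convert the linearity claim into a concrete regression statement: the equilibrium $x_i^\ast$ is a convex combination of the resting and reversal potentials, with normalized conductances as the regression weights. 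That equilibrium/convex-combination reading is not in the paper at all; it is a sharper rendering of ``interpretable as linear regression,'' and your positivity hypothesis (nonnegative maximum conductances, nonzero leak) is exactly the constraint the paper enforces during training (see its appendix) but never invokes in its proof. One caveat: your claim that $A_i$ and $B_i$ do not depend on $x_i$ holds only if self-synapses are excluded; the paper's all-to-all parameter counts include $j=i$, in which case $g_{ii}$ does depend on $x_i$. This does not break your argument, because your closing paragraph already retreats to the correct formulation --- the linearity is the instantaneous, frozen-coefficient (state-dependent-coefficient) structure of~\cite{sdre2008} --- but the sentence asserting independence from $x_i$ should be qualified accordingly.
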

%\vspace*{-1ex}
\begin{proof}
Let $x(0)$ be the input. This is propagated in time as $x(t)$. Let $w_{ji}\,\sigma(a_{ji}x_j\,{+}\,b_{ji})$ be the the state-dependent weight from neuron $j$ to neuron $i$. Then according to Equations~(\ref{eq:sa-altc}), $\dot{x}_i(t)$ is a linear regression in $x$, for each $i$. Moreover, small perturbations of $x$ lead to small changes in $\dot{x}$. 
%This argument is similar to the one in~\cite{melis18}. 
\end{proof}
ALTCs are able to pack even more parameters than SA-ACT-RNNs. For example, an ALTC with 32 neurons, connected in an all to all fashion, is able to pack 
4192 parameters, whereas an SA-ACT-RNN packs only
3104 parameters. This roughly corresponds to an NA-ACT-RNN with 64 neurons which packs 4224 parameters

\subsubsection{\em Neural activation.}
While in nature each synapse has distinct dynamics, one may want to consider that in particular cases, all outgoing synapses of a neuron have the same activation parameters. Let us call this version of ALTCs as NA-ALTCs, where NA stands for neural activation. We also interchangeably refer to ALTCs as SA-ALTCs. Formally:
\begin{equation}
\begin{array}{c}
C\,\dot{x}_i(t) = w_{li}\,(e_{li}-x_i(t)) + \sum_{j=1}^n y_{ji}(t) \\[1mm] 
y_{ji}(t) = w_{ji}\,\sigma(a_{j}x_j(t)\,{+}\,b_{j})\,(e_{i} - x_i(t)) 
\end{array}
\label{eq:na-altc}
\end{equation}
If one takes $e_{li}$ to be zero and $a_j$ to be one, NA-ALTCs are the same as NA-ACT-RNNs except for the gating term $e_{ji}\,{-}\,x_i(t)$. As discussed above, this term makes NA-ALTCs linear systems with state dependent coefficients, which not only makes them more interpretable, but allows the application of state-dependent Ricatti equations in the automatic synthesis of nonlinear controllers~\cite{sdre2008}.

In our experiments, we found that CT-RNNs where the activation function is a hyperbolic tangent, has very similar convergence and learning-accuracy properties with LTCs, where the activation function is a sigmoid. However, hyperbolic tangents fail to capture the opening degree of synaptic channels and their associated polarity, the same way gated sigmoids do: sigmoids accurately capture this degree,  and the difference of potential the gating polarity.

\begin{figure*}[!ht]
    
  \includegraphics[width=.8\textwidth]{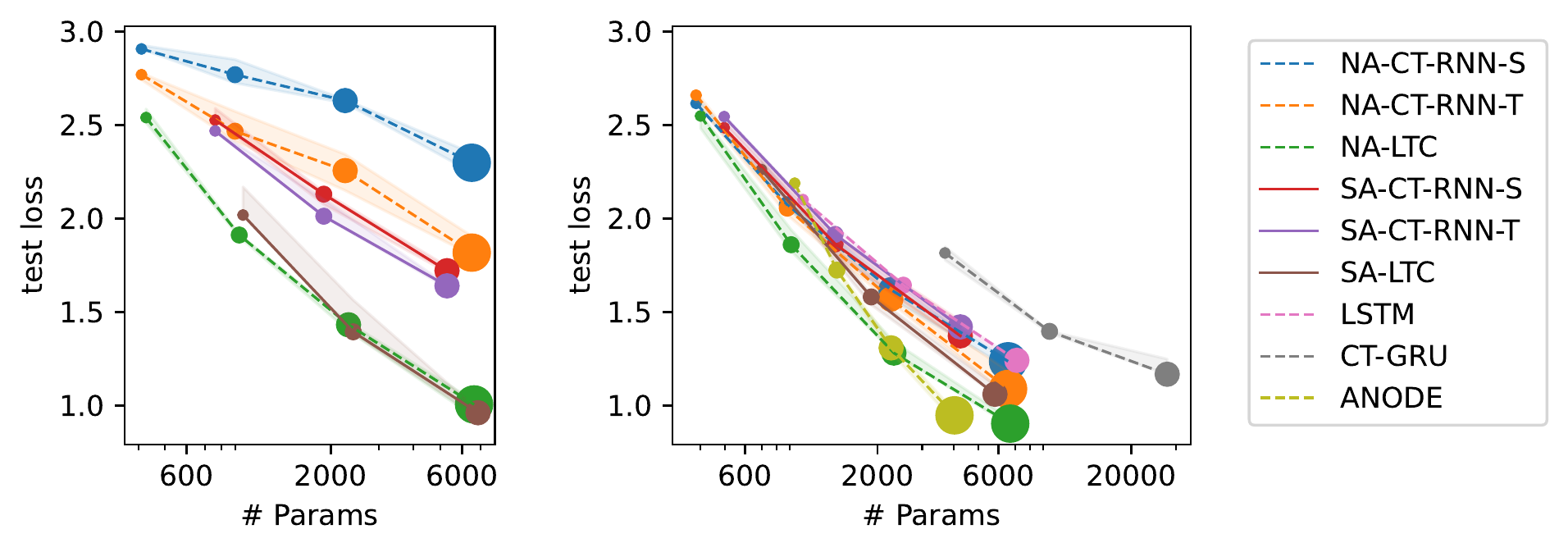}
  \centering
  \vspace*{-2ex}
  \caption{Results for the Walker2d kinematics-learning experiments. Left: synaptic inputs, Right: linear inputs. The size of the marker dots represent the number of neurons (or cells in case of LSTMs): 8, 16, 32 or 64 (from smallest to largest).}
    \label{fig:walker-plt}
\vspace*{-1ex}
\end{figure*}

\begin{figure*}[!ht]
    \centering
  \includegraphics[width=.8\textwidth]{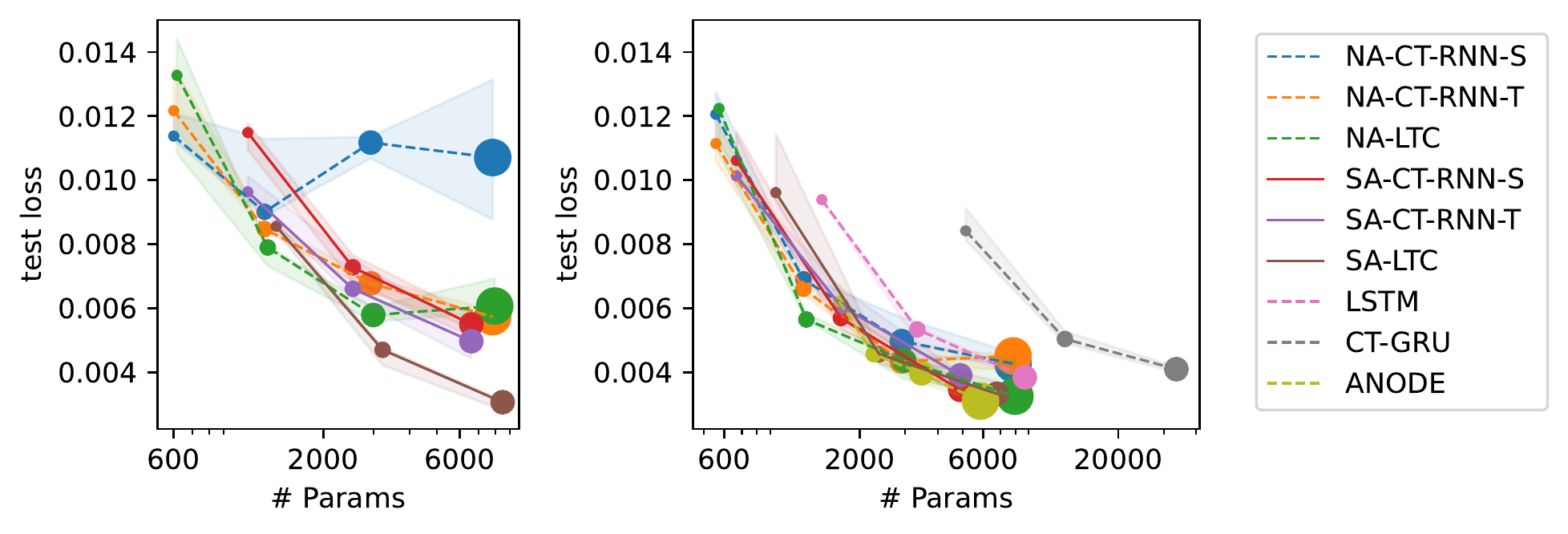}
  \vspace*{-2ex}
  \caption{Results for the Half-Cheetah kinematics-learning experiments. Left/Right and marker size as before. }
    \label{fig:cheetah-plt}
 \vspace*{-3ex}
\end{figure*}

\subsubsection{General case.}
Like CT-RNNs, LTCs have in general associated a time-varying input signal $u$. As for CT-RNNs, we consider both a sigmoidal-input version and a linear-input version.  In scalar form, the first can be written as below:
\[
\begin{array}{c}
C\,\dot{x}_i(t) = w_{li}(e_{li}-x_i(t)) + \sum_{j=1}^n\,y_{ji}(t) + \sum_{j=1}^m\,z_{ji}(t)
\end{array}
\]
\vspace*{-4mm}
\begin{equation}
\label{eq:sa-ltc-si}
\begin{array}{c}
y_{ji}(t) = w_{ji}\,\sigma(a^x_{ji}x_j(t)+b^x_{ji})(e_{ji}-x_i(t))\\[1mm] 
z_{ji}(t) = v_{ji}\,\sigma(a^u_{ji}u_j(t)+b^u_{ji})(e_{ji}-x_i(t))
\end{array}
\end{equation}

\noindent{}An SA-LTC with 32 neurons, connected in an all to all fashion and with 32 inputs, packs a total of 8288 parameters, whereas an SA-CT-RNN packs only 6176 parameters. This roughly corresponds to an NA-CT-RNN with 63 neurons, which packs a total of 8253 parameters.

The linear-input version of SA-LTCs is very similar. Formally, it is described as below:
\[
\begin{array}{c}
C\,\dot{x}_i(t) = w_{li}(e_{li}-x_i(t)) + \sum_{j=1}^n\,y_{ji}(t) + \sum_{j=1}^m\,z_{ji}(t)
\end{array}
\]
%\vspace*{-2.5ex}
\begin{equation}
\label{eq:sa-ltc-li}
\begin{array}{c}
y_{ji}(t) = w_{ji}\,\sigma(a^x_{ji}x_j(t)+b^x_{ji})(e_{ji}-x_i(t))\\[1mm] 
z_{ji}(t) = v_{ji}\,u_j(t)
\end{array}
\end{equation}

\noindent{}An SA-LTC with 32 neurons, connected in an all to all fashion and with 32 inputs, packs 5216 parameters. This roughly corresponds to a linear-input NA-CT-RNN with 51 neurons, which packs a total of 5355 parameters.

\subsubsection{\em Neural activation.}
Like in the autonous case, one can also consider NA-LTCs, where all outgoing synapses of a neuron have the same activation parameters. For the sigmoidal-input version one obtains the following equations: 
\[
\begin{array}{c}
C\,\dot{x}_i(t) = w_{li}(e_{li}-x_i(t)) + \sum_{j=1}^n\,y_{ji}(t) + \sum_{j=1}^m\,z_{ji}(t)
\end{array}
\]
%\vspace*{-2.5ex}
\begin{equation}
\label{eq:na-ltc-si}
\begin{array}{c}
y_{ji}(t) = w_{ji}\,\sigma(a^x_{j}x_j(t)+b^x_{j})(e_{i}-x_i(t))\\[1mm] 
z_{ji}(t) = v_{ji}\,\sigma(a^u_{j}u_j(t)+b^u_{j})(e_{i}-x_i(t))
\end{array}
\end{equation}

\noindent{}The linear-input version is similar, but in this case $z_{ji}(t) = v_{ji}\,u_j(t)$. As for ALTCs, NA-LTCs are very similar to NA-CT-RNNs, with the exception of the gating term $e_{ji}\,{-}\,x_i(t)$. As discussed before, one can get rid of gating, by using a hyperbolic-tangent activation function, with its associated loss of linearity and biophysical meaning.

LTCs are universal approximators~\cite{hasani2020interpretable,ltc2021}. This is true for both their autonomous and general form,  and for synaptic and linear inputs.

\begin{figure*}[!ht]
    \centering
  \includegraphics[width=.8\textwidth]{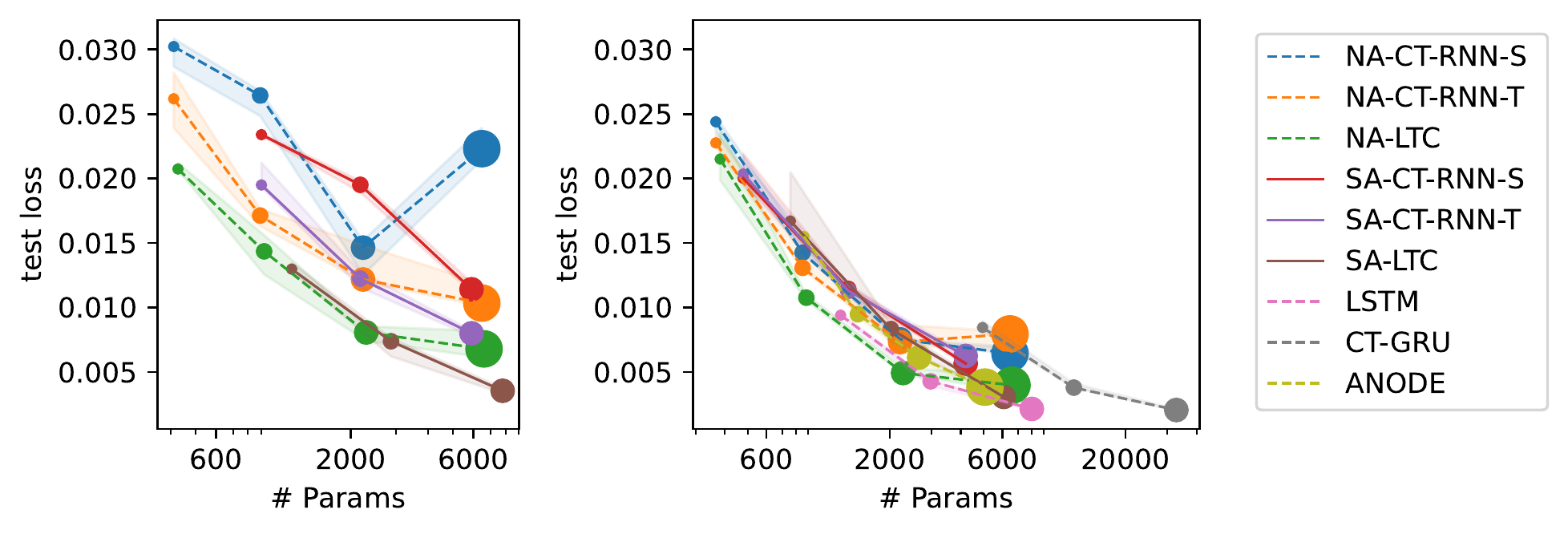}
    \vspace*{-2ex}
  \caption{Results for the Half-Cheetah Behavioural Cloning modeling experiments. Left/Right and marker size as before. }
    \label{fig:cheetah_act-plt}
    \vspace*{-2ex}
\end{figure*}

\begin{figure*}[!ht]
    \centering
  \includegraphics[width=.8\textwidth]{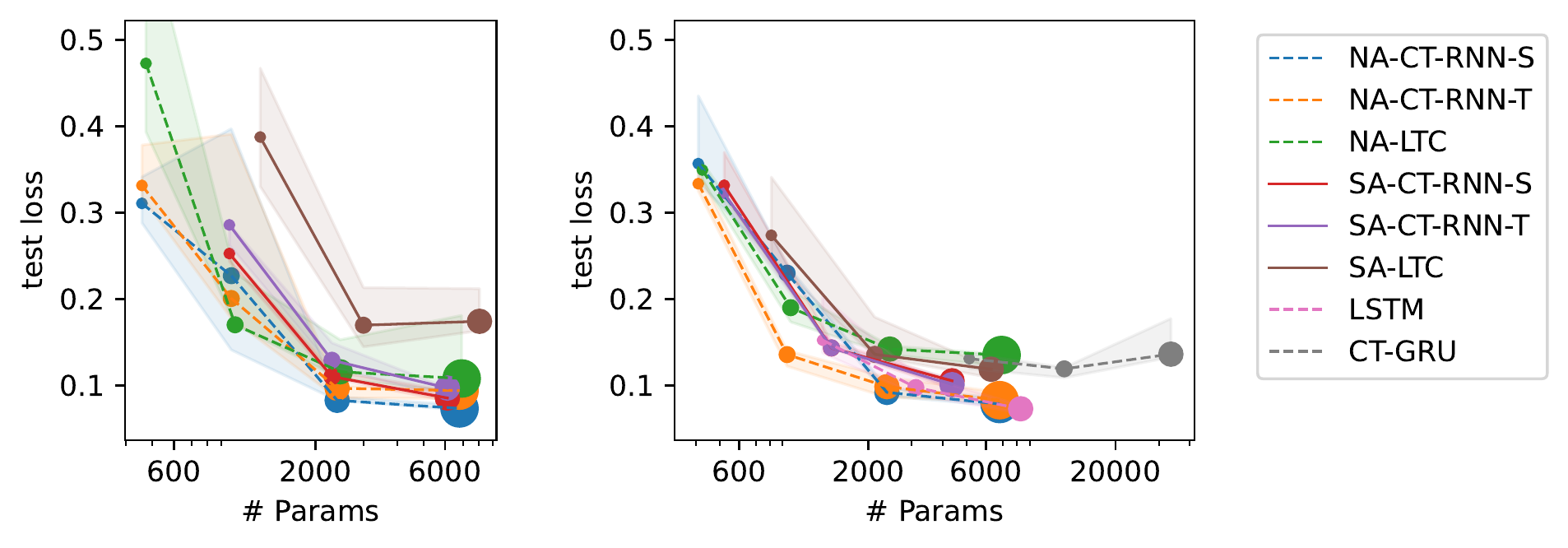}
\vspace*{-2ex}
  \caption{Results for the Sequential MNIST classification experiments. Left/Right and marker size as before. }
    \label{fig:smnist-plt}
\vspace*{-3ex}
\end{figure*}

\section{Experimental Evaluation}
\label{sec:experimentalEvaluation}
\todo{Better emphasize learning technique used (Supervised learnining, hyperparameters, etc.)}
\todo{Explain the mean and variance shown in the plot captions}
\todo{Report convergence rate and timing}
\todo{Report effects of initialization}
\todo{Whats an ablation study?}

\todo{Have more examples. See LTC paper}
\subsection{Sequential model structure}

A three-layered sequential structure was used for all experiments in this section. Let us denote by $u(t)$ the input at time $t$ and by $y_i(t)$ the output of layer $i$ at time $t$. The output of the final layer is the predicted output $\hat y(t) = y_3(t)$.

The first layer maps the inputs to an RNN-layer with either a linear ($y_1(t)\,{=}\, A_{in}u(t)\,{+}\,b_{in}$) or a synaptic (as discussed above) transformation - dubbed \textit{linear} or \textit{synaptic} input mapping, respectively. In case of synaptic input mapping these \textit{sensory} synapses were implemented in accordance with the RNN model used, i.e.~they either used synaptic or neural activation, and did also incorporate the multiplication with a difference of potentials in case of LTCs.

The second layer contains the RNN cells and its output is computed by employing an ODE-solver. The actual ODE being solved is determined by the model type. The different variants are explained in the preceding section. Unlike conventional NeuralODEs, the RNN cells in our model retain a state (and consequently information) after each time-step $t$. 

The third layer, irrespective of the specific model type used, maps the final RNN state $y_2(t)$ to the output vector $y_3(t)$ in a linear fashion, that is, $y_3(t)\,{=}\,A_{out}\, y_2(t)\,{+}\,b_{out}$.

Neural and synaptic dynamics were implemented as py\-torch-lightning modules, ensuring re-usability and portability across different devices such as CPU and GPU. Upon instantiating the module, the desired model type is specified and the parameters are initialized accordingly. Initialization bounds were taken from~\cite{ltc2021}, and are given in the Supplementary Materials. In order to reduce the parameter space, some parameters were fixed at some value and were not subject to training through backpropagation. 

Since the resulting system of ODEs is stiff, the choice of the  ODE-solver has a strong impact on the performance. We chose the explicit Euler solver with 10 unfolds for all the experiments in this paper, as it gave good enough accuracy with low time-complexity, compared to more sophisticated solvers such as Runge-Kutta methods (rk4 or dopri5).

The chain of computations for a layer of synapses with neural and synaptic activation is shown in Figure ~\ref{fig:synaptic} top and bottom respectively. Here, $x$ and $y$ are the state of the pre-synaptic and post-synaptic neuron respectively. LTC-RNNs extend CT-RNN synapses by multiplying the activation with a difference-of-potential term (bottom part of the figures). 
Synaptic activation was realised by extending vectors to matrices throughout the computation graph while also replacing each corresponding matrix-multiplication by the element-wise (or Hadamard) product. Particularly, this means that intermediary results are also represented as matrices while they are vector-valued in case of neural activation.

%\begin{figure}[t]
%    \centering
%  \includegraphics[width=.46\textwidth]{images/CT-RNN.pdf}
%  \caption{CT-RNNs with neural activation.}
%    \label{fig:ct-rnn}
%    \vspace*{-2ex}
%\end{figure}

%\vspace*{-2ex}
\subsection{Robotic Experiments}
To explore the parameter-packing and the linear-gating benefits of biophysical synapses we conducted four supervised-learning time-series experiments: Walker2d prediction, Half-Cheetah prediction, Half-Cheetah behavioural cloning, and Sequential-MNIST classification. 

The parameter-packing benefits are evaluated by using CT-RNNs and LTCs with 8, 16, 32, and 64 neurons (connected all-to-all) in the hidden layer, with both neural and synaptic activation, and with both synaptic and linear inputs. 
The gating benefits are evaluated by using CT-RNNs with both sigmoid and tanh activation functions. %While failing to capture a recurrent network as a linear system with state-dependent coefficients, the hyperbolic-tangent activation behaves otherwise very similarly to the LTCs gating (of a sigmoid with a difference of potential). 

We also provide results for LSTMs, CT-GRUs and ANODEs with 10 augmenting dimensions. We use a linear input mapping as this is the default for the first two. By lacking a stabilization term, the simple NeuralODEs discussed before perform worse than CT-RNNs, and we do not show them in our results. 

Although our results are mainly for robotic control and they are restricted to LTCs and CT-RNNs, we claim that they are applicable to any feedforward or recurrent ANN. 

The CT-RNN and LTC models tested are abbreviated as NA-CT-RNN and NA-LTC for neural activation, and SA-CT-RNN and SA-LTC for synaptic activation. CT-RNNs used either a sigmoidal or a hyperbolic-tangent activation, marked with the suffix S and T, respectively. LTCs always used a sigmoidal activation, because a hyperbolic tangent not only fails to capture the biophysics of synapses, but also renders the network very unstable. For all models we did experiments with both linear- and synaptic-input mappings, as the latter more closely capture sensory neurons. All experiments used the Adam optimizer~\cite{kingma2014adam}.

%\begin{table}[h]
%    \centering
%    \caption{Number of parameters in the hidden layer for each model where $n$ is the number of neurons.}
%    \begin{tabular}{lcc}
%    \toprule
%         \textbf{Model} & \textbf{\# Parameters} & \textbf{E.g. $n=8$}\\
%    \midrule
%        NA-CT-RNN  & $n^2 + 3n$ & 88\\
%        SA-CT-RNN  & $3n^2 + n$ & 200\\
%        NA-LTC-RNN & $n^2 + 4n$ & 96\\
%        SA-LTC-RNN & $4n^2 + n$ & 264\\
%    \bottomrule
%    \end{tabular}
%    \label{tab:num_params}
%\end{table}

\subsubsection{Learning Walker-2D kinematics.}
This robotic task is in\-spi\-red by the physics simulation in~\cite{rubanova2019latent}, and implemented by using the gym environment in~\cite{brockman2016openai}. It evaluates how well various RNNs are suited to learn kinematic dynamics. 

To create the training dataset, we used a non-recurrent policy, pretrained via proximal
policy optimization (PPO) \cite{schulman2017}, and the Rllib \cite{rllib} reinforcement learning framework. To increase the task complexity, we used the pretrained policy at 4 different training stages (between 500 to 1200 PPO iterations).
We then collected 17-dimensional observation vectors, performing 400 rollouts of 1000 steps each on the Walker2d-v2 OpenAI gym environment and the MuJoCo physics engine \cite{todorov2012mujoco}. Note that there is no need to include the actions in the training set, because the policy is deterministic. We used 15\% percent of the dataset for testing, 10\% percent for validation and the rest for training.

We aligned the rollouts into sequences of length 20 and then trained each of the models three times for 200 epochs. This was done for 8, 16, 32, and 64 RNN cells. 

Figure~\ref{fig:walker-plt} shows for each model the median test loss and its min and max values, for three runs, with respect to the number of neurons, and the associated number of parameters. 

CT-RNNs perform better for the linear input-mapping, whereas SA-LTCs for synaptic input-mapping. The packing benefit of biophysical synapses is seen in the fact that SA-CT-RNNs and SA-LTCs pack essentially as many parameters as NA-CT-RNNs and NA-LTCs, with half of the number of neurons. The gating benefit is exemplified by the fact that CT-RNN-Ts perform better than CT-RNN-Ss. LTCs perform better than CT-RNNs in all instances. LSTMs and CT-GRUs attain (even greater) parameter packing through a more elaborate concept of a structured cell, but not necessarily with greater accuracy, when their number of cells equals the number of neurons of SA-LTCs. Since LSTMs and CT-GRUs have by default a linear input-mapping, they appear only in the right figure. ANODEs, are also shown in the right figure. They perform comparable to NA-LTCs.

%\subsubsection{Learning Kinematics of a Half-Cheetah}
%\label{subsec:cheetah}
%
%\begin{figure}[t]
%    \centering
%  \includegraphics[width=0.35\textwidth]{images/half_cheetah.pdf}
%  \caption{Half-Cheetah kinematic dataset}
%    \label{fig:halfCheetah}
%\end{figure}

\subsubsection{Learning Half-Cheetah kinematics.}
Similar to the Wal\-ker-2D, we learned the kinematics of the Half-Cheetah.

For this experiment we collected 100 rollouts with a controller that was trained using Truncated Quantile Critics (TQC) \cite{tqc}. Just a single version provided by the stable-baselines zoo \cite{rl-zoo} was used this time, making the task relatively easier than the previous one. Again, each rollout is composed of a series of 1000 datapoints consisting of a 27-dimension observation vector generated by the MuJoCo physics engine and a 6-dimension action vector that is produced by the controller. The same data was used in the following two different tasks:

\begin{enumerate}
    \item {\em Kinematics modeling.} Predicting the next observation after having seen 20 preceding observations. The action vectors are not used for this task since the observations serve both as inputs and as labels.
    
    \item {\em Behavioural cloning.} Predicting the next action after having seen 20 preceding observations. In this task the observations serve as inputs while the actions are the labels.
\end{enumerate}

Figure~\ref{fig:cheetah-plt} shows the results for the Half-Cheetah kinematic modeling, and Figure~\ref{fig:cheetah_act-plt} the ones for Half-Cheetah behavioral cloning. The median, min and max test loss are represented as before. The results in both figures follow a very similar pattern as the ones for Walker-2D. However, in this case the benefits of CT-RNN-Ts are evident only for the synaptic input-mapping. LTCs remain more performant.
In Figure~\ref{fig:cheetah_act-plt}, LTSMs and CT-GRUs have a slightly better accuracy compared to SA-LTCs, at the expense of more parameters.

\subsubsection{Sequential-MNIST classification.}
The MNIST dataset consists of 70,000 gray-scale images of 28$\times$28 pixels, containing hand-written digits~\cite{MNISTHandwrittenDigit}. In order to make this task as a sequential one, the images are transformed into sequences of length 28, by taking each row vector, as an input in time. The desired output is a one-hot encoded vector representing integers from 0 to 9. Consequently, a cross-entropy loss was used when training the models. 

The results shown in Figure~\ref{fig:smnist-plt} are as before, the median, min and max test loss of three runs each. They follow a similar pattern with the previous figures, but the LTCs with synaptic inputs are less stable, and fail to properly converge for the largest number of neurons. The best accuracy is attained by NA-CTRNNs and LSTMs.

\section{Discussion and Conclusion}
\label{sec:discussion}
\todo{Don't make generalisation claims without evidence}
The main goal of this paper was to investigate the synaptic-activation and linear-gating benefits of biophysical synapses, as they occur in LTCs. To this end, we asked:
\begin{itemize}
    \item What happens if one uses neural activation in LTCs?
    \item What happens if linear gating is dropped in LTCs?
\end{itemize}

This resulted in two versions of LTCs, and four versions of CT-RNNs, with either linear or synaptic input, and with sigmoid or tanh activation, respectvely. We thoroughly examined the accuracy and parameter-packing ability of these networks, for an increasing number of neurons, and compared them to those of ANODEs, LSTMs, and CT-GRUs. 

%In order to make the paper self contained, to keep our exposition as comprehensible as possible, and to not blow up the size of the paper, we restricted our experiments (excepting LTSMs) to the continuous models in this paper. We also omitted the NeuralODE results, as they were worse than for CT-RNNs, and they would have cluttered the figures.

We observed that LTCs and CT-RNNs with synaptic activation achieve essentially the same accuracy and parameter packing, for half of the number of neurons, as LTCs and CT-RNNs with neural activation. The linear gating of LTCs further improved this accuracy. We also observed that the accuracy and packing benefits of LTCs is comparable to those of cells in LSTMs. However, the latter rely on a much more elaborate concept of a structured cell. 

We claimed that the benefits of biophysical synapses apply to any ANN. However, we showed them explicitly for LTCs and CT-RNNs, only. Hence, the full version of this paper would have to substantiate this claim. For example, for feed-forward CNNs, one could use a standard CNN base, and consistently replace its neural activations with synaptic ones. Similarly, for LTSMs and CT-GRUs, one could make their recurrent connections synaptic, by using a tanh-activation for each synapse, instead of one for each cell.

For clarity, we kept NeuralODEs as simple as possible, by confining their right-hand-side transformation to one layer. However, one could have used more powerful transformations, which might have led to better results. Nevertheless, we think that the discussed benefits would still apply. 
%Individual neurons are modeled using the "single compartment model":
%
%$$
%\begin{aligned}
%\tau \dot z = &\ W_{leak}\ (E_{leak}-z)\ + \\ 
% &\ W_{rec}\ (E_{rec} - z)\ sigm(\sigma_{rec}\ %(z-\mu_{rec}))\ + \\
% &\ W_{in}\ (E_{in} - z)\ sigm(\sigma_{in}\ %(z-\mu_{in}))
%\end{aligned}
%$$

Finally, biophysical synapses may better support sparse networks too, as it was claimed in~\cite{ncp2020}. However, for obvious space reasons, a thorough investigation of this claim had to be postponed to future work.

%\section*{Acknowledgements}
%Z.B., R.H. and R.G. are partially supported by Horizon-2020 ECSEL %Project grant no. 783163
%(iDev40), and the Austrian Research Promotion Agency (FFG), %project no. 860424. M.L. is supported in part by the Austrian %Science Fund (FWF) under grant
%Z211-N23 (Wittgenstein Award). R.H. is partially supported by Boeing.

\bibliography{references}

\section*{Appendix}

Our implementation is using the \texttt{torchdyn} package \cite{poli2020torchdyn} which in turn is based on \texttt{pytorch-lightning}. It comprises several different ODE-solvers supporting automatic differentiation. Since the package by itself was intended for implementing autonomous NeuralODEs, a small trick was necessary for creating CT-RNNs and LTC-RNNs that receive inputs. Autonomous NeuralODEs encapsulate a non-trivial function (such as a neural network) that is used for computing the derivative at each ODE-solver step $\frac{dx}{dt}=F(x)$. For achieving the non-autonomous behavior $\frac{dx}{dt}=F'(x,u)$ the input was appended to the state at each step when passed to the ODE-solver and it's derivative was set to zero $[\frac{dx}{dt}, 0] = F([x,u])$. Consequently, the solution computed by the ODE-solver amounts to being the unaltered input appended to the desired final state.

\begin{equation}
\begin{array}{c}
C\,\dot{x}_i(t) = w_{li}\,(e_{li}-x_i(t)) + \sum_{j=1}^n y_{ji}(t) \\[1mm] 
y_{ji}(t) = w_{ji}\,\sigma(a_{ji}x_j(t)\,{+}\,b_{ji})\,(e_{ji} - x_i(t)) 
\end{array}
\label{eq:sa-altc}
\end{equation}

\begin{table*}[!bh]
\centering
\caption{Results for the INSERT experiments}
\label{tab:cheetah_self}
\begin{tabular}{lcccccccc}
\toprule
              Model & \multicolumn{2}{c}{n = 8} & \multicolumn{2}{c}{n = 16} & \multicolumn{2}{c}{n = 32} & \multicolumn{2}{c}{n = 64} \\
                    & \# Par & MSE $\times 10^{-2}$ & \# Par & MSE $\times 10^{-2}$ & \# Par & MSE $\times 10^{-2}$ & \# Par & MSE $\times 10^{-2}$ \\
\midrule
              ANODE &   1663 &       $0.62\pm 0.04$ &   2263 &       $0.46\pm 0.01$ &   3463 &       $0.40\pm 0.02$ &   5863 &       $0.31\pm 0.01$ \\
             CT-GRU &   5139 &       $0.84\pm 0.05$ &  12427 &       $0.50\pm 0.02$ &        &                    - &        &                    - \\
               LSTM &   1427 &       $0.94\pm 0.00$ &   3339 &       $0.53\pm 0.02$ &   8699 &       $0.38\pm 0.00$ &        &                    - \\
\midrule
  NA-CT-RNN  linear &    555 &        $1.10\pm nan$ &   1211 &        $0.60\pm nan$ &   2907 &       $0.41\pm 0.01$ &   7835 &       $0.45\pm 0.02$ \\
NA-CT-RNN  synaptic &    601 &       $1.04\pm 0.04$ &   1249 &       $0.58\pm 0.02$ &   2929 &       $0.45\pm 0.08$ &   7825 &       $0.57\pm 0.04$ \\
\midrule
     NA-LTC  linear &    571 &       $1.15\pm 0.07$ &   1243 &       $0.56\pm 0.01$ &   2971 &        $0.40\pm nan$ &   7963 &       $0.32\pm 0.07$ \\
   NA-LTC  synaptic &    617 &        $1.08\pm nan$ &   1281 &        $0.67\pm nan$ &   2993 &       $0.52\pm 0.01$ &   7953 &       $0.61\pm 0.06$ \\
\midrule
  SA-CT-RNN  linear &    667 &       $1.01\pm 0.02$ &   1691 &       $0.54\pm 0.00$ &   4891 &        $0.36\pm nan$ &        &                    - \\
SA-CT-RNN  synaptic &   1091 &       $0.88\pm 0.04$ &   2539 &        $0.61\pm nan$ &   6587 &       $0.50\pm 0.04$ &        &                    - \\
\midrule
     SA-LTC  linear &    947 &        $0.93\pm nan$ &   2379 &       $0.46\pm 0.02$ &   6779 &       $0.33\pm 0.03$ &        &                    - \\
   SA-LTC  synaptic &   1371 &       $0.82\pm 0.03$ &   3227 &       $0.47\pm 0.03$ &   8475 &        $0.28\pm nan$ &        &                    - \\
\bottomrule
\end{tabular}
\end{table*}

Table \ref{tab:params} shows the initialization ranges used for the LTC layer parameters taken from previous work. Since LTC dynamics tend to diverge rapidly if leaving the parameters unconstrained, the following conditions were enforced during training: $C \geq 0$,  $w \geq 0$, $a \geq 0$. These constraints are a direct consequence of the capacitor equation Eq. \ref{eq:sa-altc} from which LTCs are derived wherein neither conductance ($w$), capacitance ($C_m$) nor synaptic gating ($a$) may be negative. Similar constraints are also assumed in the proof of universal approximation found in \cite{ltc2021}.

\begin{table}
    \centering
    \caption{Parameters of the recurrent layer and their initialization bounds. $^*$ only present in LTC-RNNs.}
    \begin{tabular}{llc}
    \toprule
         \textbf{Name} & \textbf{Description} & \textbf{Initialization}\\
    \midrule
        \multicolumn{3}{l}{\textit{synaptic parameters}}\\
        $C\ ^*$ & Membrane capacitance & 1 (fixed)\\
        $w$  & Synaptic strength & 0.01 - 1.0\\
        $b$ & Synaptic midpoint & 0.3 - 0.8\\
        $a$ & Synaptic slope & 3 - 8\\
        $e\ ^*$  & Reversal potential & -1 or +1\\
    \midrule
        \multicolumn{3}{l}{\textit{cell body parameters}}\\
        $e_l$  & Resting potential & 0 (fixed)\\
        $w_l$  & Leakage conductance & 0.01 - 1.0\\
        
    \bottomrule
    \end{tabular}
    \label{tab:params}
\end{table}

\subsection{Type-descriptors}\label{ctrnn-type-descriptors}

The form of the derivative used whithin the NeuralODE is configured by
passing a \emph{type descriptor} which is a string of the form:

\vskip 1em
ctrnn\textit{\_[w-mode]act[factor][rec-type]\_in-mode\_[lis]}

\vskip 1em

In any case the derivative is computed using the following formula: $\dot x = Synapses(x, x) + Input(u,x)- decay(x)$. The type descriptor determines how the individual terms are calculated.

\begin{table}[h]
 \centering
 \caption{Type descriptor components and resulting formulas}
\begin{tabular}{ll}
\toprule
\textbf{w-mode} & \(Synapses(x, y)\)\tabularnewline
\midrule
None & \(a\ \sigma(wx+b)*Factor(y)\)\tabularnewline
r & \(w\ \sigma(x+b)*Factor(y)\)\tabularnewline
v & \(w\ \sigma(a(x+b))*Factor(y)\)\tabularnewline
\midrule
\textbf{act}& \(\sigma(x)\)\tabularnewline
\midrule
'sigm' & $1/(1+e^{-x})$\\
'tanh' & $tanh(x)$\\
\midrule
\textbf{factor}& \(Factor(x)\)\tabularnewline
\midrule
\textit{None} & \(1\)\tabularnewline
'*' & \((1-x)\)\tabularnewline
'+' & \((e-x)\)\tabularnewline
\midrule
\textbf{rec-type}&\tabularnewline
\midrule
\textit{None} & neuronal activation\tabularnewline
's' & synaptic activation\tabularnewline
\midrule
\textbf{in-mode} & \(Input(u,x)\)\tabularnewline
\midrule
'linear' & \(Iu + b_i\)\tabularnewline
'synaptic' & \(Synapses(u,x)\)\tabularnewline
\midrule
\textbf{lis} & \(decay(x)\)\tabularnewline
\midrule
\textit{None} & \(\tau x\)\\
 'lis' & \(\tau (x-x_0)\)\tabularnewline
\bottomrule
\end{tabular} 
\end{table}

Sizes of \(E\), \(b\) and \(\alpha\) depend on the \emph{recurrence
type} used:

\begin{table}[H]  
\centering 
\begin{tabular}{ll}
\toprule
\textbf{Recurrence type} & \textbf{Shape of parameters}\tabularnewline
\midrule
neuronal (\textit{None})& (\emph{model\_size})\tabularnewline
synaptic ('s') & (\emph{model\_size}, \emph{model\_size})\tabularnewline
\bottomrule
\end{tabular} \end{table}

The different \textit{w-mode}s allow for adjusting the way Synapses are parameterized, \textit{act} determines the activation function used, the \textit{factor} distinguishes LTCs from CT-RNNs, \textit{rec-type} is used for switching between \textit{Neuronal} and \textit{Synaptic activation}, \textit{in-mode} determines the input mode and \textit{lis} is used when the initial state (= leakage potential) should be learnable.

The recurrence type determines how recurrenct connections are
implemented: 

\begin{itemize}
\item 
\emph{neuronal}: corresponds to the ctrnn model.
Connections can be thought of being only of linear nature, all incoming
synapses of a particular cell share the same bias and summation happens
before applying the activation function.
\item
  \emph{synaptic}: each synapse has a separate bias \(b\), scale \(w\)
  and reference potential \(E\). An additional summation step happens
  after calculating individual synaptic activations.
  \(\dot x = \sum [Synapses(x,x)]+Input(u,x) -decay(x)\)
\end{itemize}

\subsection{Examples}

\begin{itemize}
    \item Vanilla CT-RNN: \texttt{ctrnn\_vtanh\_linear}
    \item SA-LTC: \texttt{ctrnn\_vsigm+s\_synaptic}
\end{itemize}

\end{document}